\newtheorem{definition}{Definition}
\newtheorem{statement}{Statement}
\newtheorem{theorem}{Theorem}
\newtheorem{problem}{Problem}
\title{Learnable: Theory vs Applications }
\author{Marina Sapir }
\date{March 2018}
\begin{document}

\maketitle

\begin{abstract}
Two different views on machine learning problem: Applied learning (machine learning with business applications)  and Agnostic PAC learning are formalized and compared here.  I show that, under some conditions, the theory of PAC Learnable provides a way to solve the Applied learning problem. However, the theory requires to have the training sets so large, that it would make the learning practically useless. I suggest to shed some theoretical misconceptions about learning to make the theory more aligned with the needs and experience of practitioners.

\end{abstract}

\section{Introduction}

Machine learning includes a practical side as well as a theoretical side. Practitioners solve real life problems, theoreticians study theory of learning. Practitioners need help answering the questions the life poses. Theoreticians give the  answers. Unfortunately, they are, apparently,  answering different questions, about some-what different subjects. For example, practitioners deal with limited data and deadlines, while the theory talks about  what happens  when  training data increase indefinitely. There appears to be some disconnect here.

More the over, the practitioners often can not formulate their questions exactly,  because  the language of the existing theory  was developed by theoreticians to  study different issues and different situations.

Here I formulate the learning problem as it is encountered in practical applications, pose the related real life  questions and show the answers to these questions which follow from PAC learnable theory. To do it, I express both  questions of practitioners and the results of the PAC learnable theory in terms of problem solving.

\section{Practical point of view}

What is machine learning, practically speaking? Let us look  on the typical situation when such a problem arises.

\subsection{Business learning scenario}

Machine learning is used when a business wants to model a relationship between some observed properties  and a hidden property of   objects of interest.

For example, a business may want to know which patients will be responsive to certain therapy based on their test results, or which engines require certain repairs based on the sensors data, which clients will default on loans and so on. If the business learns to predict the hidden property most of times, it can help with decision making about clients, patients or engines.

Often, this prediction is part of the company's product, and it  is used for decision making  outside of the company. For example, a doctor will use recommendations to prescribe the drug, and a mechanic will use the suggested diagnostics for engine repairs.

 To do the modeling,  the company would have accumulated data about both observed and the hidden properties of a limited set of such objects.  One way to proceed would be to have so many examples that any new example would be the same or very similar to the ones already known. Most of times, it is impossible: there are not enough patients, clients or engines in the world for that. Any way, the cost of gathering huge amounts of data about real life objects is, usually,  prohibitive.  It is exactly what makes application of  machine learning attractive. Machine learning, as any data modeling, is used to compensate for insufficient knowledge which can be gained form experience.

 There is common understanding that the observed features and the hidden property are affected by unknown, random, unaccountable factors. It means, the company has reasonably limited expectations for the accuracy of the prediction.

In a typical scenario, when a business encounters the research problem  requiring prediction of a hidden property, it supplies its machine learning scientists with the precious (but not precise) accumulated data on the observed and hidden properties of the objects, the threshold  of an admissible error rate, as well as a deadline.

\subsection{Definition of the Applied learning problem}

Since classical theory of learning deals with binary classifications, let us give an exact definition of a learning problem for a case of binary classification as it is encountered in real life.

\begin{problem}[Applied Learning  ]
Given: a threshold $ \epsilon \in (0,1)$ and a random  sample of $m$ i.i.d. pairs $S_D = \{( x_i, y_i ), \; x_i \in \chi, \; y_i \in \gamma = \{0, 1\}, i = [m]$\} generated  by unknown probability distribution $D$ on $\chi \times \gamma$; To find:  a function $h: \chi \rightarrow \gamma$ such that
$$ L_D(h) = P_{(x,y) \sim D} [h(x) \neq y]    \le \epsilon $$.

\end{problem}

The given random sample $S$  is called training set.
An instance of the AL problem with the threshold $\epsilon $ and the training set $S$ generated by a distribution $D$ will be denoted by $ (\epsilon, S)_D.$

The elements from $\chi$ are called ``points''; the elements from $\gamma$ ``labels''.  The training set is, usually,  interpreted as past knowledge. The output function $h: \chi \rightarrow \gamma$ is said to ``predict'' labels $y$ given points $x$ of the pairs $(x, y)$.

The function $L_D(h), $ the  probability that the function $h$ incorrectly predicts a label, is called ``generalization error''.

The output $h$ with  generalization error below $\epsilon$ is the ``solution'' of $(\epsilon, S)_D, $ or $h \rhd (\epsilon, S)_D. $

 The procedure which defines a function $\chi \rightarrow \gamma$ given the instance of the problem is called ``learner''.  If, for a learner $\Delta$, $\Delta(S) \rhd (\epsilon, S)_D$ the learner $\Delta$ \textit{succeeds} on the  $(\epsilon, S)_D$, otherwise it \textit{fails} on this instance of the problem. The instance  $(\epsilon, S)_D$ is soluble, if there exists a learner $\Delta$ such that
 $\Delta(S) \rhd (\epsilon, S)_D. $

The critical questions machine learning scientists will need to answer are:

\begin{enumerate}
    \item Is given instance of the problem soluble?
    \item How to find a solution? What may be the best approach for the given data?
    \item How to verify the found function?
\end{enumerate}

\subsection{Main Challenges of Learning}

To understand the main challenges of learning we may look at the relationship between the Applied learning and a decidable computational problem.

 A computational problem is a   function  $f: X \rightarrow Y,  $  defined in a user-friendly language.

For example, ``Given a Boolean formula $x,$ to find  a combination $y$ of values of its variables such that $x(y) = 1$'', or ``Given a weighted graph  and two of its vertices $x = \langle G, A, B \rangle, $ to find a shortest path $y$ between $A, B$ on $G.$''

If the problem is ``decidable'', there is a procedure, which outputs $f(x)$ given $x \in X.$  More the over,  an intended user of the problem's definition can reproduce this procedure. The goal is, usually, to develop an algorithm which does the same computation efficiently.

 In a learning problem, the function $y= f(x)$ is represented by a  sequence  of its inputs and outputs $S = \{(x_i, y_i)\}$, with the similar goal to produce the procedure  which outputs $f(x)$ on each instance $x \in X$.

Since the   function is not defined on its whole domain, there is no way of testing if, for some new input,  the predicted output is ``correct''. More the over, while a computational problem defines a deterministic function, the relationship between inputs $x$ and outputs $y$ in the learning problem is a random function. An input $x$ may have different outputs, each with its own probability.

Summarizing,   there are three major specific challenges presented by learning:

\begin{enumerate}
    \item Uncertainty of  labels. If the distribution $D$ is arbitrary, there is no way to predict the labels even  on the points $X_S = \{x_i, i = [m]\}$ of the  training set.

    \item Unpredictability. There is nothing in the conditions of the problem which would allow one to predict the labels on the points $x \in \chi \setminus X_S,$ even if one believes that the labels on the points of the training set have no uncertainty.

    \item Unspecified Verification.  The problem does not supply a way to confirm that any function is the solution.
\end{enumerate}

The challenges make the Applied learning problem not just ill posed, but somewhat incongruous. Generally speaking, the inputs have no bearing on the output, as in the famous problem by Good Soldier Shvejk (as I recall it): ``the house has 2 floors, there are 6 windows on each floor; what is the name of my grandmother? ''

Each of these challenges is a serious, unavoidable issue in practical applications.

 Take, for example, the uncertainly of labels. Since many factors affecting the labels are unknown and not reflected in the descriptions $x$, the relationship between $x$ and $y$ are not deterministic. The level of uncertainty is impossible to evaluate  if the domain $X$ is infinite: the training set's point are all different, usually.

To appreciate the difficulty the  labels uncertainty brings among other issues, let us consider a vastly simplified Toy problem where two other issues are irrelevant.

\begin{problem}[Toy]
Input: The domain $\chi$ contains $m$ points $\chi = \{x_i\}, i = [m]$. The marginal distribution $D_{\chi}$ is uniform, and for every there exists $q, 0 < q < 0.5 $ such that $x_i,   \; D(x_i, 0) = q$ or $D(x_i, 1) = q.$
Any training set  $S$ has $m$ instances, with all the points of the domain $\chi.$
Output:  a function  $h: \chi \rightarrow \gamma$ such that
$$ L_D(h) = P_{(x,y) \sim D} [h(x) \neq y]    \le \epsilon$$.
\end{problem}

The Toy problem is a specific case of the Applied learning problem.  Each settings $(m, q, \epsilon) $ describe a collection of $2^m$ instances of the problem with different training sets.

In each instance of the Toy problem, the points of a training set cover whole domain $\chi$, so one does not have to worry about unpredictability outside of the training set or the validation.

If $\epsilon < q,$ no solution of the Toy problem is possible.

If $\epsilon > q,$ there is a trivial solution: the function $$h^{\prime}(x) = arg \; max_{(0,1)} (D(x, 0), D(x, 1) ). $$
The function $h^\prime$ has generalization error $q$, which satisfies the goal. The solution does not depend on the training set.

Yet, every learner fails on some training sets.

\begin{statement} There are settings of the Toy problem where the solution exists, but  any learner fails on at least $20\%$ of training sets.
\end{statement}

\begin{proof}
For each instance of the Toy problem, a ``trivial learner'' $t$ is equivalent to its training set:   $h_S^t: h_S^t(x_i) = y_i, i = [m]$, or $h_S^t = S.$

Let us notice that a training set $S$   splits domain $\chi$ on two parts: $\chi  = X_1 \bigcup X_2,$ such that if $x_i \in X_1, \; D(x_i, y_i) = 1 - q $, and if $x_i \in X_2, \; D(x_i, y_i) = q. $ It is convenient to call the points $X_1$ ``straight'', because they have the most likely labels in the training set, and the points in the part $X_2$ will be called ``flipped''.

The probability $z(k, m)$  that the training set has exactly $k > 0$ flipped points out of $m$ can be calculated by the Bernoulli formula: $z(k, m) = C_m^k \cdot p^k \cdot q^{m-k}.  $ The probability $z(0, m)$ that there are no flipped points in the training set is $ t = q^{m}.$ The probability $G$ that there is not more than $\epsilon$ of flipped points in the training set can be calculated by the formula
$$G  = \sum_0^{\epsilon \times m}z(i, m). $$ For  $m = 50, \; q = 0.1, \; \epsilon = 0.12. $ the calculations yield $G = 0.77, G < 0.8. $ So, for a trivial learner, probability of success is less than $80\%. $

If the learner is not trivial, it changes some labels on the training set to produce the hypothesis. Since there is no way to know which points are straight,  it will change labels randomly on some straight points and some flipped. Changing labels on straight points increases error of the hypothesis, changing labels on flipped points decreases error of the hypothesis comparing with the trivial learner. For the settings $m = 50, \; q = 0.1, \; \epsilon = 0.12 $   there expected to be much more straight points (9 times more) than flipped points, the  expected number of changing labels  on straight points will be proportionally higher than the number of changed labels on flipped points. Then the expected number of errors of a hypothesis of the non-trivial learner will be higher than for the trivial learner.

\end{proof}

It is obvious that the issue with Applied learning is that there is too much uncertainty in the problem. On another hand, many practical machine learning problems were solved. It means, there are commonly occurring situations when the problem is soluble.  To find the general approach to the learning problem, the problem needs to be reformulated with additional, natural constrains on input data and on on solutions. One path to reformulate the problem is explored in PAC learnable theory.

\section{Theoretical approach to learning}

The main idea of learning in theory \cite{ShalevLearnabilityStability} is to
minimize generalization errors  to within arbitrary precision based only on a finite training set as the size of  the given training set tends to infinity. (The idea that the training set increases indefinitely sounds very theoretical.)

\subsection{The main concepts and results of PAC learning}

Even though the theory of learning does not have an exact definition of the learning problem, there is the concept of ``learnable'', which I will use to formulate the learning problem.

\begin{definition}[Agnostic PAC Learnable]
A function class $H$ is agnostic PAC learnable if there exists a function $m_H: (0,1)^2 \rightarrow N $ and a learner $\Delta_H$ with the following property: For every $\epsilon, \delta \in (0,1),$ and for every distribution $D$ over $\chi \times \gamma $, when running the learner $\Delta_H$ on $m \ge m_H(\epsilon, \delta)$ i.i.d. instances generated by $D$, it outputs a function $h \in H$ such that, with probability of at least $1 - \delta$ (over the choice of the training examples),
$$ L_D(h) \le \min_{h^\prime \in H} L_D(h^\prime) + \epsilon.$$

\end{definition}

The concept implies that ``learning'' means solving the next problem:

\begin{problem} [Agnostic PAC Learning] Given are: class of functions $H: \chi \rightarrow \gamma,$ two numbers $\epsilon, \delta \in (0,1)$. To find: a learner  which,  for every large enough  randomly generated sequence of i.i.d. pairs $S \in  (\chi \times \gamma)^*$ from an arbitrary random distribution $D$ on $\chi \times \gamma,$   finds a function $h \in H$ such that $$ L_D(h) \le \min_{h^\prime \in H} L_D(h^\prime) + \epsilon$$ with probability of at least $1 - \delta.$

\end{problem}

The learner, which solves the Agnostic PAC learning problem for the class $H$ is called the successful learner of the class $H$. The number $ m_H(\epsilon, \delta)$  of instances sufficient for probably approximately correct learning for the given $\epsilon, \delta$ is the property of the  Agnostic PAC learnable class $H$. It can be  called a $(\epsilon, \delta)$-saturation point for the class $H.$

The differences between the Agnostic PAC learning problem and the Applied learning problem are summarized in the Table ~\ref{table:problems}.

\begin{table}
\caption{Applied vs Agnostic PAC Learning}

\begin{tabular}{| c | c | c |}
  \hline
    & Applied Learning & Agnostic PAC learning \\ \hline
  Training set & Fixed, finite  & Random generator, unlimited\\ \hline
  Distribution & Fixed distribution & Any distribution \\ \hline
  Hypothesis source & No restriction & Class $H$ \\ \hline
  Desired error &  $ \le \epsilon$ & $ \le min\; L_D(h^\prime) + \epsilon$ \\ \hline
  Chance of bigger error & Not specified & $\delta$ \\
  \hline
\end{tabular}
 \label{table:problems}

\end{table}

In general, the problems are not comparable.

For example, a learner finding solution $h \in H$ on the instance $(\epsilon, S)_D $ of Applied learning problem may not be a successful PAC learner for the class $H$ if it does not work on many other distributions, regardless how large the training set is.

On another hand, a successful PAC learner for a class $H$ may not produce a solution for the instance $(\epsilon, S)_D $ of Applied learning problem even if it has a  solution in the class $H,$ if the size of the training set $S$ is below the $(\epsilon, \delta)$-saturation point for the given class $H$  for a  small enough $\delta.$

The main idea of Agnostic PAC learning  is to identify the classes of functions, where large enough training sets allow to approximate the function on whole domain reliably, regardless of distribution.  The ``uniform convergence'' property  defines such classes.

\begin{definition}[Uniform Convergence] A hypothesis class $H$ has the uniform convergence property if for every $\epsilon, \delta \in (0, 1)$ there exists $M$ such that for every probability distribution $D$  over $\chi \times  \gamma$, if the sequence of i.i.d. pairs $S \in (\chi \times \gamma)^*, \; S \sim D$ has the length over $M$ then then with probability at least $1- \delta$  $S$ satisfies the condition:  for every $h \in H$
$$ | L_S(h) - L_D(h) | \leq \epsilon  .$$
\end{definition}

In the definition, the threshold $M$ depends only on $\epsilon, \delta$, and the class $H.$ The definition implies that the class $H$ with uniform convergence can be characterized by its function $M = m^u_H(\epsilon, \delta).$

 The  results of PAC Learnable theory are neatly summarized in The Fundamental Theorem of Statistical Learning (\cite{Shalev}, p48).

\begin{theorem}[Fundamental Theorem]
The next conditions are equivalent:
\begin{itemize}
\item $H$ is agnostic PAC learnable;
\item $H$ has property of uniform convergence.
\item Any empiric risk minimizer on $H$ is a successful PAC  learner for this class;
\item $H$ has finite VC-dimension.
\end{itemize}
\end{theorem}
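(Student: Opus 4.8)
The plan is to establish the four listed conditions as equivalent by proving a closed cycle of implications, namely Uniform Convergence $\Rightarrow$ ERM succeeds $\Rightarrow$ agnostic PAC learnable $\Rightarrow$ finite VC-dimension $\Rightarrow$ Uniform Convergence. Once each property implies the next and the loop closes, all four are equivalent, which is exactly the assertion of the theorem.

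First I would dispatch the three implications I expect to be routine. For Uniform Convergence $\Rightarrow$ ERM succeeds: if, with probability at least $1-\delta$, the inequality $|L_S(h) - L_D(h)| \le \epsilon/2$ holds simultaneously for all $h \in H$, then any empirical risk minimizer $\hat h$ satisfies $L_D(\hat h) \le L_S(\hat h) + \epsilon/2 \le L_S(h^\ast) + \epsilon/2 \le L_D(h^\ast) + \epsilon$, where $h^\ast$ minimizes $L_D$ over $H$; this is precisely the agnostic guarantee, and feeding $\epsilon/2$ into the uniform-convergence sample bound supplies the required $m_H$. The step ERM succeeds $\Rightarrow$ agnostic PAC learnable is immediate, since an empirical risk minimizer is a particular learner and the definition of learnable asks only for the existence of some learner. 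For agnostic PAC learnable $\Rightarrow$ finite VC-dimension I would argue by contraposition: if the VC-dimension is infinite, then for every $m$ there is a set of $2m$ points shattered by $H$, every labeling of which is realized by some member of $H$; a No-Free-Lunch argument (averaging over distributions concentrated on such a shattered set) then forces every learner to have expected error bounded away from the optimum on a sample of size $m$, contradicting learnability for small $\epsilon$.

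The hard direction is finite VC-dimension $\Rightarrow$ Uniform Convergence, and this is where the real work lies. The plan is: let $d$ be the VC-dimension of $H$. By the Sauer--Shelah lemma the growth function (the maximum number of label patterns $H$ realizes on any $m$ points) is bounded by $O(m^d)$, that is, polynomially rather than exponentially in $m$. I would then run the symmetrization (``double sample'' / ghost sample) argument to replace the supremum over the arbitrary distribution $D$ by the behavior of $H$ restricted to a finite set of $2m$ points, at which stage only the polynomially many distinct label patterns matter. A union bound over these patterns combined with a Hoeffding tail estimate on each fixed pattern yields $\Pr[\sup_{h \in H} |L_S(h) - L_D(h)| > \epsilon] \le 4\,(2m)^d e^{-m\epsilon^2/8}$ up to constants, and solving this inequality for the sample size produces the uniform-convergence threshold $M = m^u_H(\epsilon,\delta)$.

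The main obstacle is this last implication, and within it the symmetrization step: converting a statement that must hold for the single, arbitrary, unknown distribution $D$ into one involving only the finitely many dichotomies $H$ induces on a random double sample. Getting the probabilistic coupling between the real and ghost samples correct, and ensuring the Sauer--Shelah bound is applied to the right finite projection, is the delicate part; the tail estimate and the final algebra extracting $M$ from the inequality are then routine.
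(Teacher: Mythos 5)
The paper does not prove this theorem at all: it is imported by citation from Shalev-Shwartz and Ben-David (p.~48) and used as a black box, so there is no ``paper's own proof'' to compare against. Your sketch is the canonical textbook argument from that same source --- the implication cycle Uniform Convergence $\Rightarrow$ ERM succeeds $\Rightarrow$ agnostic PAC learnable $\Rightarrow$ finite VC-dimension (via No-Free-Lunch by contraposition) $\Rightarrow$ Uniform Convergence (via Sauer--Shelah, symmetrization, and a union bound over dichotomies) --- and it is correct in outline, with the hard work correctly located in the last implication and in the ghost-sample coupling. The only caveat is that what you have written is a plan rather than a proof: the symmetrization step and the No-Free-Lunch averaging argument are named but not carried out, and those are precisely the places where the details are nontrivial; as a roadmap, however, it is accurate and closes the equivalence correctly.
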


The empirical risk in the theorem refers to average of errors of a hypothesis on the training set: $L(h, S)= \sum( |h(x_i) - y_i| ) / m . $

The term ``empiric risk minimizer on $H$''  in the theorem refers to a procedure which, given a training set $S$   finds a function  $h \in H$ minimizing the empiric risk. It is convenient to denote $ERM_H(S)$ a hypothesis obtained by an empiric risk minimizer in a class $H$ on the training set $S.$

\subsection{Implications for Applied Learning}

The next theorem shows how and when this theory can help to solve the Applied learning problem.

\begin{theorem}[From Theory to Applications] \label{Yields}

Suppose, an instance  $(\epsilon, S)_D,$ of the Applied learning problem satisfies
   $$|S| >   max( m_H^u(\epsilon /2 , \delta), m_H(\epsilon /2 , \delta))$$  for a class $H$ with finite VC-dimension and $0 < \delta < 1.$

Then, for $h = ERM_H(S),$  if $L_S( h ) \le \epsilon / 2$, $h $ is the solution for $(\epsilon, S)_D$ with probability at least $1 - \delta$.  If $L_S(h) > 2 \epsilon $ the instance does not have a solution within class $H.$
\end{theorem}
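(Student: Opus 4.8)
The plan is to derive both halves of the statement from a single application of the uniform convergence property, using the \emph{observable} empirical error $L_S(h)$ as the bridge to the \emph{inaccessible} generalization error $L_D(h)$. Since $|S| > m_H^u(\epsilon/2,\delta)$ and $H$ has finite VC dimension (hence, by the Fundamental Theorem, the uniform convergence property), there is an event $E$ of probability at least $1-\delta$ on which \emph{every} $h' \in H$ satisfies $|L_S(h') - L_D(h')| \le \epsilon/2$. I would condition on $E$ throughout; the point to keep in mind is that both conclusions are really assertions about this one good event.

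For the first half, suppose the measured empirical error satisfies $L_S(h) \le \epsilon/2$ for $h = ERM_H(S)$. Applying uniform convergence to $h$ itself on the event $E$ gives
$$ L_D(h) \le L_S(h) + \epsilon/2 \le \epsilon/2 + \epsilon/2 = \epsilon, $$
so $h \rhd (\epsilon, S)_D$. Since $E$ occurs with probability at least $1-\delta$, this is exactly the stated confidence.

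For the second half, I would invoke the defining property of the empirical risk minimizer: $L_S(h) = \min_{h' \in H} L_S(h')$, so every $h' \in H$ has $L_S(h') \ge L_S(h) > 2\epsilon$. On $E$, uniform convergence applied to each such $h'$ yields
$$ L_D(h') \ge L_S(h') - \epsilon/2 > 2\epsilon - \epsilon/2 = \tfrac{3}{2}\epsilon > \epsilon. $$
Hence no function in $H$ has generalization error at most $\epsilon$, i.e. the instance has no solution within $H$.

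The step I expect to require the most care is logical rather than computational: the second conclusion, though stated without an explicit probability, also rests on the event $E$, so I would make explicit that it holds with probability at least $1-\delta$ (or is a statement about the good event) rather than deterministically. I would likewise comment on the deliberate gap between the thresholds $\epsilon/2$ and $2\epsilon$: uniform convergence at precision $\epsilon/2$ leaves an undecided band $L_S(h) \in (\epsilon/2,\, 2\epsilon]$, and the factor of two simply buys enough slack to keep both implications clean. Finally, the hypothesis also invokes $m_H(\epsilon/2,\delta)$, which is not strictly needed for the two inequalities above (uniform convergence alone suffices); its role is to guarantee that $ERM_H$ is a successful PAC learner for $H$, so that taking $h = ERM_H(S)$ is a justified choice of output, and I would remark on this rather than leave the appearance of both saturation points unexplained.
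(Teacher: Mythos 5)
Your proof is correct, and the first half coincides with the paper's argument; the second half, however, takes a genuinely different route. The paper handles the case $L_S(h) > 2\epsilon$ by combining two separate properties of the output $h = ERM_H(S)$: uniform convergence applied to $h$ alone gives $L_D(h) > \tfrac{3}{2}\epsilon$, and the agnostic PAC guarantee (this is where the hypothesis $|S| > m_H(\epsilon/2,\delta)$ is actually used) gives $L_D(h) - \min_{h'\in H} L_D(h') < \epsilon/2$, whence $\min_{h'\in H} L_D(h') > \epsilon$. You instead use only the uniform convergence event together with the \emph{definition} of the empirical risk minimizer: since $L_S(h') \ge L_S(h) > 2\epsilon$ for every $h' \in H$, uniform convergence over the whole class forces $L_D(h') > \tfrac{3}{2}\epsilon$ for every $h'$. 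Your version is cleaner in two respects. First, it rests on a single good event of probability $1-\delta$, whereas the paper silently intersects the uniform-convergence event with the PAC-learning event, which strictly speaking would require a union bound (or an appeal to the fact that the PAC guarantee for ERM is itself derived from the same uniform-convergence event) to preserve the $1-\delta$ confidence. Second, you make explicit that the second conclusion is also only probabilistic --- it holds on the event $E$, not deterministically --- which the theorem statement and the paper's proof both leave unaddressed. Your closing remark that $m_H(\epsilon/2,\delta)$ is not needed for your argument is accurate and is precisely the point at which your proof diverges from the paper's: the paper does use that hypothesis, through the PAC property of ERM, in its version of the second half. You also obtain the slightly stronger conclusion $L_D(h') > \tfrac{3}{2}\epsilon$ for all $h'$, versus the paper's $\min_{h'} L_D(h') > \epsilon$.
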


\begin{proof}
According to the Fundamental theorem, the class $H$ has uniform convergence and is agnostic PAC learnable.

Since $|S| >  m_H^u(\epsilon /2, \delta), $   the uniform convergence implies that for any hypothesis $h \in H,$ $ | L_S(h) - L_D(h) | \leq \epsilon /2. $ If $L_S(h) \le \epsilon / 2$, then $ L_D(h) < \epsilon$  with probability at least $1 - \delta.$ It means, $h$ is a solution of the $(\epsilon, S)_D$ with probability at least $1 - \delta.$

Suppose, $L_S(h) >  2 \epsilon.$ By the property of uniform convergence  then $L_D(h)  >  1.5 \epsilon.$
By the property of PAC learnable $L_D(h) - min_H( L_D(h') ) < \epsilon / 2.$ Then $min_H(L_D(h')) > \epsilon.$ Therefore, the instance $(\epsilon, S_D)$ does not have a solution in $H.$
\end{proof}

In the conditions of the Theorem \ref{Yields}, we can answer all the critical questions in relation to the class $H$:

\begin{enumerate}
    \item Does the instance $(\epsilon, S)_D$ of the problem have a solution in class $H$? It does, if the empiric risk minimizer has the empiric risk below $\epsilon / 2$. It does not,  if the empiric risk is above $2 \epsilon.$
    \item What is the best approach to solve the problem? Applying any empiric risk minimizer.
    \item How to verify the solution? There is not need to verify it  if empiric risk is below $\epsilon /2$ and the probability of not finding a solution  $\delta$ is small enough.
\end{enumerate}

But how often do Applied problems satisfy the requirement on the size of the training set  in the Theorem ~\ref{Yields}?

There is the estimate of the saturation points  in \cite{Shalev}, p 341.  The table ~\ref{table:size} shows the estimates of saturation point depending on the thresholds $\epsilon, \delta$ and the VC dimension of $H.$

\begin{table}
\centering
\caption{Estimates of $m_H$}

\begin{tabular}{| c | c | c | c |}
  \hline
     $\epsilon$ & $\delta $ &VCdim(H) & $m_H$ \\ \hline
 0.2 & 0.05 & 5   & $140\;672$ \\ \hline
  0.2 & 0.05 & 10 & $290\;6826$ \\ \hline
  0.2 & 0.05 & 30 & $921\;275$ \\ \hline

0.1 & 0.05 & 5   & $651\;412$ \\ \hline
  0.1 & 0.05 & 10 & $1\;340\;176$ \\ \hline
  0.1 & 0.05 & 30 & $4\;217\;438$ \\ \hline

\end{tabular}

 \label{table:size}
\end{table}

To anybody familiar with real life applied problems these estimates look very wrong, excessive. For comparison, the table ~\ref{table:UCI} shows sizes of the classification problems with not more than 4 classes among the most popular problems in UCI Machine Learning Repository.

\begin{table}
\centering
\caption{UCI Repository}

\begin{tabular}{| c | c | c| c |}
  \hline
 Data & Features & Records & Classes\\ \hline
 Iris  & 4  & 150 & 3 \\ \hline
 Adult & 14 & 48842 & 2 \\ \hline
 Car Evaluation & 6 & 1728 & 4 \\ \hline
Breast Cancer & 32 & 569 & 2 \\ \hline
Heart Disease & 75 & 303 & 2 \\ \hline
Bank Marketing & 20 & 41188 & 2 \\ \hline
\end{tabular}

 \label{table:UCI}
\end{table}

The comparison between the tables ~\ref{table:size} and ~\ref{table:UCI} gives a hint  on why the theory is never used in the applications. The theory can be applied only in the cases, when the function we are trying to approximate is assumed to have low variability and known on so many points, that we already know  (almost) everything there is to know. The issue is, in this case there is no need in machine learning.

\section{Conclusions}

The learning problem presents formidable challenges with too much uncertainly,  unknowable and unverifiable. The only way to find some solution is to narrow the search, impose some restrictions on solutions.

The theory of PAC learnable explores searching for a solution (1) within classes of function with limited complexity (finite VC dimension) , and (2) with huge training sets.

Imposing these requirements reduces the uncertainty of a problem to such a degree that the problem can be either solved  with a simple function approximation algorithm, or it can be shown not to have  a solution within the given class of functions.

However, these requirements are virtually never satisfied in applications, because they would contradict the goal of learning: to compensate for  deficiency of accumulated data.

The theory does not help to solve the real life problems, and it does not explain, why so many machine learning tasks with relatively small training sets are being solved successfully. It does not explain why so many algorithms developed specifically for machine learning work, when empiric risk minimization does not.

Perhaps, there shall  be another theory which allies better with machine learning practitioners' experience and needs.

And here is what I would expect from this theory of practical machine learning :
\begin{enumerate}
\item The concept of learning process shall not involve indefinite increase in the training set size, nor  an indefinite improvements in accuracy.  Neither is possible or desirable in applications. Learning is one time event with the specific distribution, training set, and desired accuracy.
\item The concept of  learning shall not involve the idea that we need to evaluate generalization error based on the error on the training set. In real life, people use the set aside data for that.
\item The concept of a learning process shall include the set aside data as its integral part.
\item The study of learning shall find desirable properties of distributions which make real life learning possible.
\item Perhaps, the study of learning can find some desirable properties of the training sets, besides their size.
\item The popular machine learning algorithms (such as Naive Bayes, SVM, kNN)  can be used as case studies to demonstrate that the tasks with desirable properties can be solved by  most of them.
\end{enumerate}

\bibliographystyle{authordate1}
\bibliography{Learnable}

\end{document}